\theoremstyle{plain}
\newtheorem{theorem}{Theorem}
\newtheorem{proposition}[theorem]{Proposition}
\theoremstyle{definition}
\theoremstyle{remark}
\title{Universal Adversarial Purification with DDIM Metric Loss for Stable Diffusion}
\author{
    Li Zheng\textsuperscript{\rm 1},
    Liangbin Xie\textsuperscript{\rm 1 \rm 2},
    Jiantao Zhou\textsuperscript{\rm 1}\thanks{Corresponding author},
    He YiMin\textsuperscript{\rm 1}
}
\begin{document}

\maketitle

\begin{abstract}
Stable Diffusion (SD) often produces degraded outputs when the training dataset contains adversarial noise. Adversarial purification offers a promising solution by removing adversarial noise from contaminated data. However, existing purification methods are primarily designed for classification tasks and fail to address SD-specific adversarial strategies, such as attacks targeting the VAE encoder, UNet denoiser, or both.
To address the gap in SD security, we propose Universal Diffusion Adversarial Purification (UDAP), a novel framework tailored for defending adversarial attacks targeting SD models. UDAP leverages the distinct reconstruction behaviors of clean and adversarial images during Denoising Diffusion Implicit Models (DDIM) inversion to optimize the purification process. By minimizing the DDIM metric loss, UDAP can effectively remove adversarial noise. Additionally, we introduce a dynamic epoch adjustment strategy that adapts optimization iterations based on reconstruction errors, significantly improving efficiency without sacrificing purification quality.
Experiments demonstrate UDAP’s robustness against diverse adversarial methods, including PID (VAE-targeted), Anti-DreamBooth (UNet-targeted), MIST (hybrid), and robustness-enhanced variants like Anti-Diffusion (Anti-DF) and MetaCloak. UDAP also generalizes well across SD versions and text prompts, showcasing its practical applicability in real-world scenarios.
\end{abstract}

\begin{links}
    \link{Code}{https://github.com/whulizheng/UDAP}
\end{links}

\section{Introduction}

SD has emerged as a groundbreaking framework in the field of image and video generation~\cite{li2023gligen, ramesh2021zero, gafni2022make, ding2021cogview}, renowned for its ability to synthesize high-quality images and videos. Recent advancements, such as DreamBooth~\cite{ruiz2023dreambooth} and LoRA~\cite{hu2021lora}, have further enhanced the capabilities of SD, positioning it at the forefront of personalized image generation~\cite{chen2023pixartalpha,chen2024pixartdelta,croitoru2023diffusion}. However, all of these methods are vulnerable to adversarial attacks~\cite{van2023anti, li2024pid, liang2023adv}: adding imperceptible noise to the training images can mislead the SD models to generate inaccurate or degraded outputs (shown in the middle panel of Fig.~\ref{fig:teaser}).

In such cases, the models are compromised, resulting in a significant waste of computational resources. Therefore, removing the imperceptible noise in the adversarial images before training the SD models is crucial to defend SD models against such adversarial attacks.

\begin{figure}[!t]
    \centering
    \includegraphics[width=0.85\linewidth]{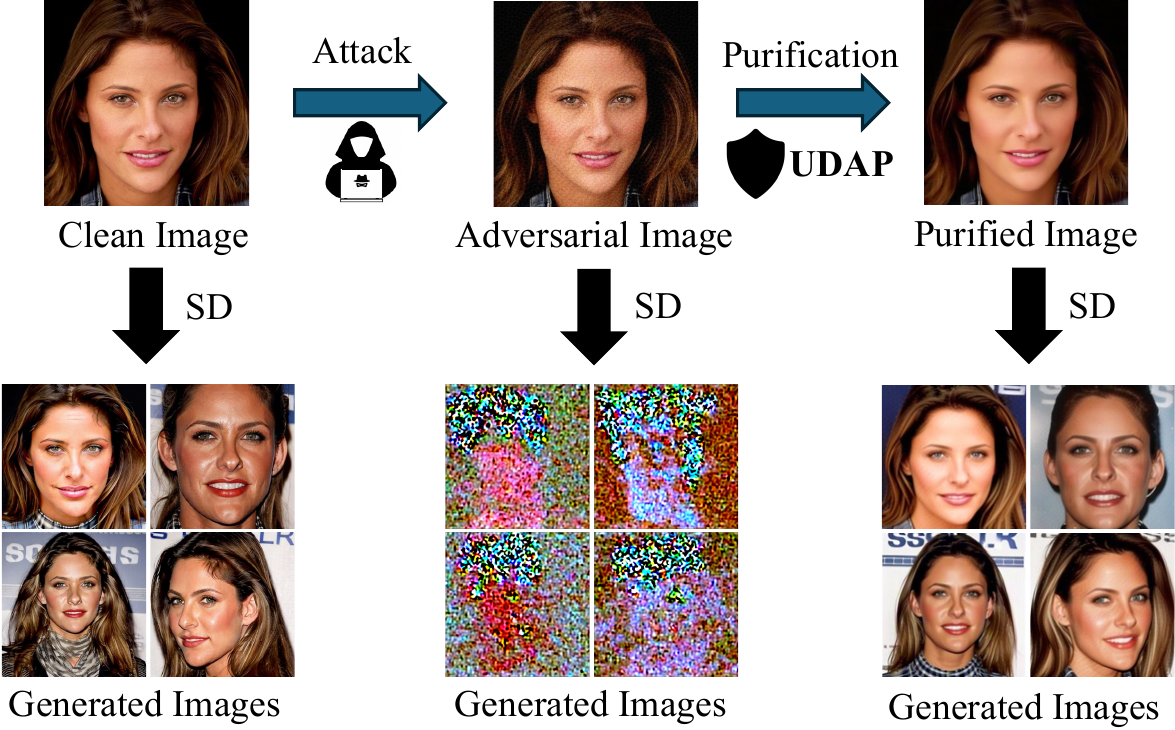}
    \caption{Illustration of the impact of adversarial attacks on SD models and the effectiveness of our proposed UDAP.
    }
    \label{fig:teaser}
\end{figure} 

The majority of work on defending neural networks against adversarial attacks has focused on image classification tasks~\cite{baniecki2024adversarial,li2022defending,chakraborty2021survey} and can be categorized into adversarial training~\cite{bai2021recent,shafahi2020universal,shafahi2019adversarial} and adversarial purification~\cite{costa2024deep,liao2018defense}. In contrast to adversarial training, which is defined to defend against specific attacks that it has been trained on, adversarial purification methods can better defend against previously unseen threats in a plug-and-play manner~\cite{nie2022diffusion}. As a result, significant progress has been made in the area of adversarial purification, evolving from the use of autoencoders in MagNet~\cite{meng2017magnet}, to the application of GANs in Defense-GAN~\cite{samangouei2018defense}, and more recently, the leveraging of diffusion models~\cite{nie2022diffusion,zollicoffer2025lorid,wang2024diffhammer}. While adversarial purification techniques for classification tasks have advanced continuously, to the best of our knowledge, there is no research specifically targeting adversarial purification for SD models. Given the evolving nature of adversarial attack techniques targeting SD, there is an urgent need for adversarial purification methods specifically tailored for SD.

Considering that current adversarial attack methods (e.g., Anti-DreamBooth (Anti-DB)~\cite{van2023anti}, PID~\cite{li2024pid}, and MIST~\cite{liang2023adv}) targeting SD generate adversarial noise are specifically designed to attack different parts of SD~\cite{truong2024attacks}, there is a clear need for a robust method capable of handling various adversarial attacks. Furthermore, in practical scenarios, training sets often contain a mixture of adversarial and clean images. Applying the same adversarial purification to all images would undoubtedly be time-consuming and inefficient, making it impractical for real-world use. Therefore, it is essential to design a dynamic optimization mechanism that can adaptively adjust the strength of purification based on the underlying adversarial noise level in the input images. 

These considerations motivate our proposal of a robust adversarial purification method, specifically tailored to defend against various SD adversarial attacks. Although different SD adversarial attack methods target different parts of the SD model, they share a common characteristic: they are optimized in the latent space of the model. This characteristic implies that the latent corresponding to an adversarial image can alter the results of both the forward and inversion processes in SD. DDIM inversion~\cite{song2020denoising} is a technique that repeatedly performs inversion on the latent before executing the forward process, which therefore can amplify the distance between the reconstructed image and its corresponding adversarial image.
As shown in Fig.~\ref{fig:inv-cmp}, we observe that through DDIM inversion, the $L_{2}$ loss between a clean image and its corresponding reconstruction is small, whereas the $L_{2}$ loss between an adversarial image and its reconstruction is significantly larger.
We propose utilizing the advantageous properties of DDIM inversion as a metric loss. 
Using the latent corresponding to the input image as an initial latent, we perform DDIM inversion on the latent and iteratively optimize it by calculating the $L_{2}$ loss between the reconstructed image and the input image. 
We find that the design of DDIM reconstruction metric loss is simple, yet highly effective, performing well across a variety of adversarial attack methods, SD versions, and diverse prompts.
Moreover, to enhance practical efficiency, we further introduce a dynamic optimization strategy by setting a reconstruction loss threshold as a tradeoff of purification strength and computational cost. This dynamic optimization strategy enables adjustment of purification epochs based on the underlying strength level of adversarial perturbations in the input images. 
As illustrated in Fig.~\ref{fig:teaser}, the proposed UDAP effectively purifies images, enabling the SD model trained on these images to produce highly realistic outputs.

Our key contributions are as follows: \textbf{1)} As far as we know, UDAP is the first universal adversarial purification method specifically designed for SD models.
\textbf{2)} Through theoretical analysis, we demonstrate that adversarial samples targeting SD (regardless of their types) will result in significant DDIM reconstruction errors. This is the theoretical foundation of our universal adversarial purification method.
\textbf{3)} We propose a novel DDIM metric loss to measure the distance between reconstructed and input images. Minimizing this loss optimizes the initial latent into a clean representation, effectively eliminating adversarial noise. Additionally, we introduce a dynamic optimization strategy to adaptively adjust purification epochs, improving efficiency in real-world scenarios. 
\textbf{4)} Through quantitative and qualitative evaluations, our proposed UDAP demonstrates superior purification performance across various adversarial techniques.
For instance, when defending against the PID attack on the CelebA-HQ dataset, our proposed UDAP reduces the Face Detection Failure Rate (FDFR) to ($0.14$), outperforming baselines like GridPure ($0.21$) by over ($33\%$).
Moreover, the dynamic optimization strategy makes UDAP approximately twice faster while maintaining the high purification effectiveness.

\begin{figure}[tbp]
    \centering
    \includegraphics[width=0.8\linewidth]{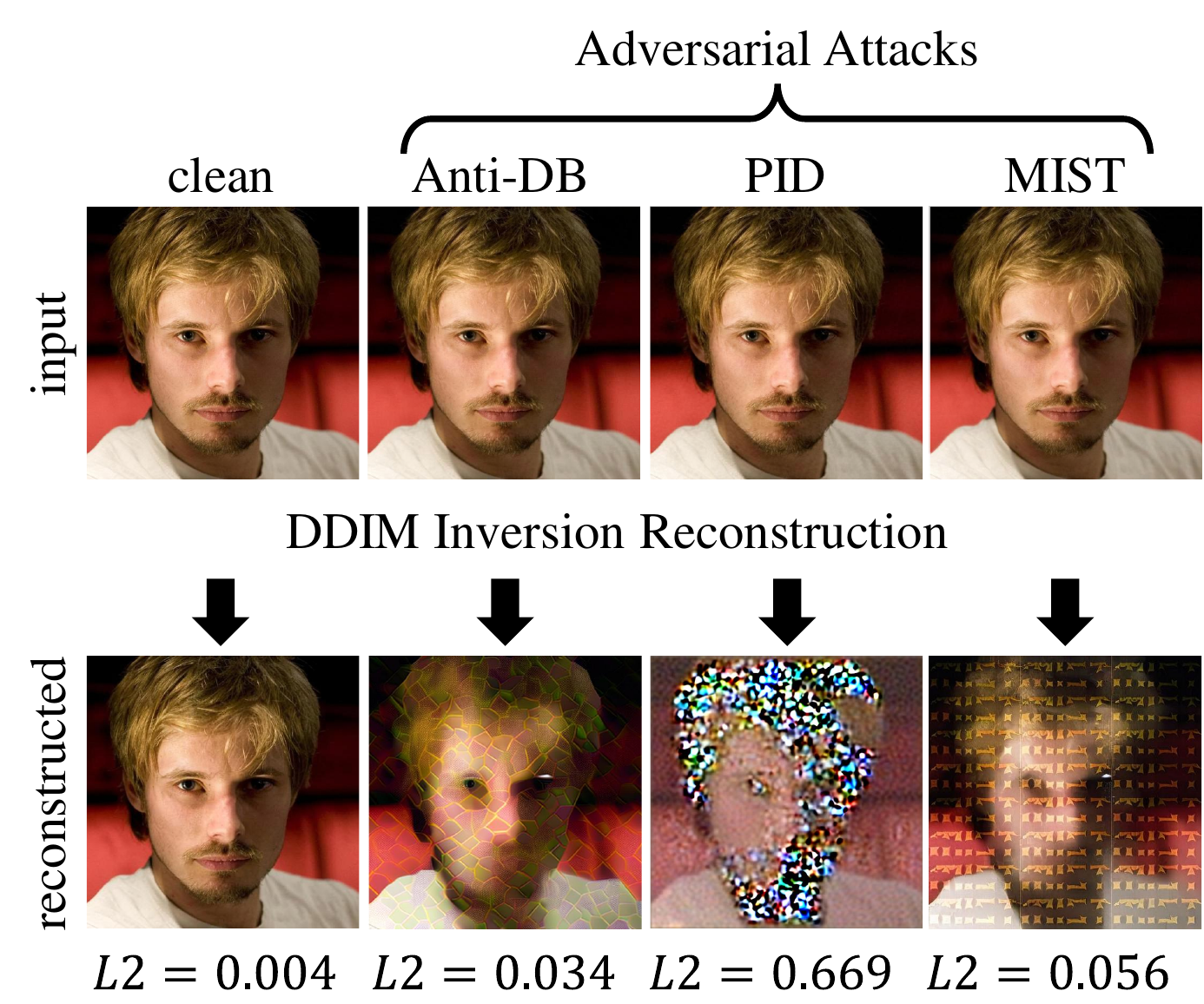}
    \caption{The first row shows the input images and the second row shows their reconstructed images through DDIM inversion reconstruction. Value $L_2$ means the average $L_2$ distance between input and reconstructed images.}
    \label{fig:inv-cmp}
\end{figure}

\section{Related Works}

\subsection{Adversarial Attacks on SD}
\label{sec:related-poi}
Although SD is highly capable of personalized image generation, it is vulnerable to adversarial attacks. These attacks insert malicious examples into the training data to corrupt the model's learning process, resulting in distorted or low-quality image outputs.
MIST creates adversarial noise targeting both the SD model's encoder and its denoising process for a more thorough disruption. It also watermarks adversarial images, which interferes with image generation and makes the attack difficult to mitigate. Anti-DB simulates the DreamBooth fine-tuning process to dynamically attack the model, making standard data purification defenses less effective. In contrast, PID generates prompt-agnostic adversarial noise to corrupt the VAE encoder, disrupting the model's internal image representations for any given text prompt.
Building on these approaches, Anti-DF~\cite{zheng25antidf} enhances attack performance by combining adversarial noise with prompt tuning and injecting semantic disruptions tailored for SD models. Meanwhile, MetaCloak~\cite{liu2024metacloak} introduces a meta-learning framework to generate transferable adversarial perturbations through bi-level optimization, marking the advent of a new era of adaptive adversarial attack techniques.

\subsection{Adversarial Purification}
Adversarial purification has emerged as a key defense against adversarial attacks, complementing adversarial training by removing perturbations from inputs before model processing. Early methods used GANs and energy-based models, but recent advances favor diffusion models for their high-quality reconstructions and robustness against unseen attacks~\cite{yoon2021adversarial}. These diffusion models, particularly SD, leverage iterative denoising to purify adversarial inputs, enhancing classifier robustness~\cite{nie2022diffusion,wang2022guided,lin2024robust,zollicoffer2025lorid,10656194}. 
Despite these advancements of adversarial purification, existing methods primarily focus on classification tasks and do not adequately address the unique challenges posed by adversarial attacks on SD models. These defense paradigms are fundamentally misaligned with the demands of generative models. Their primary objective is to preserve output invariance (i.e., a class label) against input perturbations, whereas the goal for a model like SD is to maintain the intricate perceptual quality and semantic coherence of the entire generated output. 
This core difference in objectives makes the direct application of classification-centric defenses to generative models both ineffective and inappropriate.
Consequently, there is a pressing need for a universal adversarial purification framework specifically designed to counteract diverse adversarial techniques targeting SD. 

\section{Method}
\label{sec:method}
In this section, we first analyze the distinct behaviors of clean and adversarial images under DDIM inversion, which forms the theoretical basis for our method. 
We then present our proposed UDAP framework, followed by a detailed explanation of the inversion optimization using the DDIM metric loss and the dynamic optimization epochs.
\subsection{Analyzing DDIM Inversion Reconstruction}
\label{sec:ddim}
Let $\boldsymbol{x}^{\text{adv}}$ be an adversarial example targeting diffusion models, and let $\hat{\boldsymbol{x}}^{\text{adv}}$ denote its reconstruction via DDIM inversion. 
We introduce a positive constant $Q$ to serve as a lower-bound threshold for the distance, quantifying the minimum impact of a successful adversarial attack.
\begin{proposition}
 $\| \boldsymbol{x}^{\text{adv}} - \hat{\boldsymbol{x}}^{\text{adv}} \|
\ge Q$ when the timestamp of DDIM inversion process approaches the total time steps $T$.
\end{proposition}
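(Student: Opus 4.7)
My plan is to track how an adversarial perturbation propagates through the entire DDIM inversion-and-reconstruction pipeline and to show that, because the attack is crafted to maximize a latent-space loss, the resulting mismatch cannot cancel over the $T$ steps. Write $\boldsymbol{x}^{\text{adv}} = \boldsymbol{x} + \boldsymbol{\delta}$ with clean image $\boldsymbol{x}$ and adversarial perturbation $\boldsymbol{\delta}$, let $\boldsymbol{z}^{\text{adv}}_0 = \mathcal{E}(\boldsymbol{x}^{\text{adv}})$ be the VAE encoding, and let $\{\boldsymbol{z}^{\text{adv}}_t\}_{t=0}^{T}$ denote the latents along the DDIM inversion trajectory, with reconstruction $\hat{\boldsymbol{x}}^{\text{adv}} = \mathcal{D}(\hat{\boldsymbol{z}}^{\text{adv}}_0)$ obtained by running the DDIM sampler backwards from $\boldsymbol{z}^{\text{adv}}_T$.

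\textbf{Step 1: telescope the reconstruction gap.} First I would use Lipschitz continuity of the VAE decoder $\mathcal{D}$ to reduce $\|\boldsymbol{x}^{\text{adv}} - \hat{\boldsymbol{x}}^{\text{adv}}\|$ to a lower bound on $\|\boldsymbol{z}^{\text{adv}}_0 - \hat{\boldsymbol{z}}^{\text{adv}}_0\|$, and then telescope the latter along the inversion-then-sampling loop. The resulting expression is a sum, weighted by scheduler coefficients $\alpha_t$, of discretization residuals $r_t$ involving differences of $\epsilon_\theta$ evaluated at adjacent timesteps. This is the standard observation that DDIM inversion is exact only in the continuous-time limit; every finite-$T$ mismatch is captured by the $r_t$ terms, and on clean inputs these residuals are of order $\mathcal{O}(1/T)$ because $\epsilon_\theta$ is trained to minimize precisely this prediction error.

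\textbf{Step 2: lower-bound per-step residuals from the attack objective.} Next I would invoke the defining property of each attack discussed in Section 2 (PID, Anti-DB, MIST, Anti-DF, MetaCloak): by construction the adversarial example maximizes either the VAE reconstruction loss, the UNet denoising loss at sampled timesteps, or a combination of both. From the success criterion of the attack I then obtain a constant $c>0$ and a non-empty subset $\mathcal{S}\subseteq\{0,\dots,T\}$ on which $\|\epsilon_\theta(\boldsymbol{z}^{\text{adv}}_t,t) - \epsilon_\theta(\boldsymbol{z}_t,t)\| \ge c$ (with the analogous encoder-side inequality for VAE-targeted attacks). These per-step inequalities feed directly into the residuals $r_t$ from Step 1.

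\textbf{Step 3 and main obstacle.} Finally I would combine the per-step bounds into a global lower bound of the form $Q = g(c,|\mathcal{S}|,\{\alpha_t\},L_\mathcal{D})$ with $g>0$. The hardest step is the non-cancellation argument here: upper bounds on accumulated DDIM error are routine, but a lower bound requires ruling out destructive interference between residuals at different $t$. I would handle this by restricting attention to the contiguous high-$t$ window the attack actually targets, where the growing noise level $\sqrt{1-\alpha_t}$ as $t\to T$ pushes the adversarial component into a regime in which successive per-step contributions share a common sign along the projection onto the perturbation direction, so that they accumulate rather than cancel via a reverse triangle inequality. Any residual slack is absorbed into $Q$, which is consistent with the proposition's interpretation of $Q$ as the minimum impact threshold of a successful attack.
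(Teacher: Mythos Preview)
Your approach has a genuine gap at the hinge between Step~1 and Step~2. The discretization residuals $r_t$ in Step~1 measure how inexact the DDIM inverse-step-then-forward-step round-trip is \emph{on the adversarial trajectory itself}; they involve $\epsilon_\theta(\boldsymbol{z}^{\text{adv}}_t,t)$ at adjacent timesteps. The attack success criterion you invoke in Step~2, however, lower-bounds $\|\epsilon_\theta(\boldsymbol{z}^{\text{adv}}_t,t) - \epsilon_\theta(\boldsymbol{z}_t,t)\|$, i.e., it compares the adversarial and clean trajectories at the \emph{same} timestep. A large value of the latter does not force the former to be large: an adversarial input can make $\epsilon_\theta$ behave very differently from the clean input while the DDIM round-trip on that adversarial input remains nearly exact (all $r_t\approx 0$). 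So the per-step inequalities you extract from the attack do not ``feed directly into the residuals $r_t$'' as claimed, and Step~3's already delicate non-cancellation argument is left with nothing to sum.

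The paper avoids both problems by never telescoping. It first shows, straight from the attack condition and $\bar\alpha_t\to 0$, that the inverted latents at large $t$ satisfy $\|\boldsymbol{x}_t - \boldsymbol{x}^{\text{adv}}_t\| \gtrsim Q$. It then introduces a \emph{surrogate clean} sample $\boldsymbol{x}'$ whose forward inversion lands exactly on $\boldsymbol{x}^{\text{adv}}_t$; DDIM reversibility on clean inputs gives $\hat{\boldsymbol{x}}^{\text{adv}}\approx \boldsymbol{x}'$. The lower bound then comes from an \emph{inverse-Lipschitz} step: because the forward map $q_{\boldsymbol\theta}(\cdot,t,\boldsymbol{c})$ has Lipschitz constant $L_t\ll 1$ as $t\to T$, the endpoint gap $\|\boldsymbol{x}_t - \boldsymbol{x}^{\text{adv}}_t\|\ge Q$ forces $\|\boldsymbol{x}_0 - \boldsymbol{x}'_0\|\ge Q/L_t$, and $\boldsymbol{x}^{\text{adv}}\approx\boldsymbol{x}$ finishes the argument. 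There is no per-step accumulation and hence no cancellation issue; the small Lipschitz constant does the amplification in one stroke. If you want to salvage your route, you would need an additional argument linking the clean-vs-adversarial $\epsilon_\theta$ gap to the round-trip residual on the adversarial trajectory, which is essentially what the paper's surrogate-image construction supplies.
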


\begin{proof}
According to the definition of adversarial examples, for a clean sample $\boldsymbol{x}$, suppose that it exists an adversarial example $\boldsymbol{x}^{\text{adv}} = \boldsymbol{x} + \boldsymbol{\delta}$ with a small perturbation $\left\|\boldsymbol{\delta}\right\|_p \leq \xi$. The noise predictions $\epsilon_{\boldsymbol{\theta}}(\boldsymbol{x}, t, \boldsymbol{c})$ and $\epsilon_{\boldsymbol{\theta}}(\boldsymbol{x}^{\text{adv}}, t, \boldsymbol{c})$ should satisfy such condition:
\begin{equation}
\| \epsilon_{\boldsymbol{\theta}}(\boldsymbol{x}, t, \boldsymbol{c}) - \epsilon_{\boldsymbol{\theta}}(\boldsymbol{x}^{\text{adv}}, t, \boldsymbol{c}) \| \geq Q,
\label{eq:1}
\end{equation}
\noindent where $\epsilon$ is a well trained diffusion model with parameters $\boldsymbol{\theta}$, $\boldsymbol{c}$ is the text embedding of input prompt and $Q$ denotes a clearly perceptible distance.

According to \cite{song2020denoising}, the DDIM inversion process $\boldsymbol{x}_{t} = q_{\boldsymbol{\theta}}(\boldsymbol{x}, t,\boldsymbol{c})$ can be defined as $\boldsymbol{x}_{t} = \sqrt{\bar{\alpha}_t} \boldsymbol{x}_ + \sqrt{1 - \bar{\alpha}_t} \epsilon$, where $\bar{\alpha}_t$ represents the predefined parameters of DDIM.
We can have:
\begin{equation}
\begin{aligned}
\| \boldsymbol{x}_{t} - \boldsymbol{x}_{t}^{\text{adv}} \| & = 
\| \sqrt{\bar{\alpha}_t}(\boldsymbol{x} - \boldsymbol{x}^{\text{adv}}) + \\&  \sqrt{1 - \bar{\alpha}_t}\left(\epsilon_{\boldsymbol{\theta}}\left(\boldsymbol{x}, t, \boldsymbol{c}\right) - \epsilon_{\boldsymbol{\theta}}\left(\boldsymbol{x}^{\text{adv}}, t, \boldsymbol{c}\right)\right) \|
\end{aligned}
\label{eq:2}
\end{equation}
By the definition of adversarial sample $\boldsymbol{x}_{t}^{\text{adv}}$ and substituting \eqref{eq:1} into \eqref{eq:2}, we have $\| \boldsymbol{x}_{t} - \boldsymbol{x}_{t}^{\text{adv}} \|
\geq \| \sqrt{\bar{\alpha}_t} \delta + \sqrt{1 - \bar{\alpha}_t} Q \|.$
When $t \to T$ (total time steps of DDIM), by the definition of DDIM, $\bar{\alpha}_t \to 0$, thus, we can have:
\begin{equation}
\| \boldsymbol{x}_{t} - \boldsymbol{x}_{t}^{\text{adv}} \|
\geq \| \sqrt{\bar{\alpha}_t} \delta + \sqrt{1 - \bar{\alpha}_t} Q \|\approx Q.
\label{eq:3}
\end{equation}
Moreover, by the process of DDIM inversion, both $\boldsymbol{x}_{t}$ and $\boldsymbol{x}_{t}^{\text{adv}}$ should eventually follow Gaussian distributions through iterative noise injection. 

Similarly, we define the DDIM denoise process as $\hat{\boldsymbol{x}}_{0} = p_{\boldsymbol{\theta}}(\boldsymbol{x}_{t}, t,\boldsymbol{c})$. 
Then, by the reversibility of DDIM~\cite{song2020denoising} and given that $\boldsymbol{x}_{t}^{\text{adv}}$ follows a Gaussian distribution, there should exist a normal sample $\boldsymbol{x'}$ such that $\boldsymbol{x}_{t}^{\text{adv}} = q_{\boldsymbol{\theta}}(\boldsymbol{x'}_{0}, t,\boldsymbol{c})$.
By definition, we have $
\hat{\boldsymbol{x}}_{0}^{\text{adv}} = 
p_{\boldsymbol{\theta}}(\boldsymbol{x}_{t}^{\text{adv}}, t,\boldsymbol{c}) = 
p_{\boldsymbol{\theta}}(q_{\boldsymbol{\theta}}(\boldsymbol{x'}_{0}, t,\boldsymbol{c}), t,\boldsymbol{c}) = 
\hat{\boldsymbol{x}}^{'}_{0}
$, where $q_{\boldsymbol{\theta}}(\boldsymbol{x}_{0}, t,\boldsymbol{c})$ and $p_{\boldsymbol{\theta}}(\boldsymbol{x}_{t}, t,\boldsymbol{c})$ are inverse functions of each other and $\hat{\boldsymbol{x}}^{'}_{0}$ is the DDIM inversion reconstructed $\boldsymbol{x}^{'}_{0}$.
Thus, by the reversibility of DDIM, we have: 
\begin{small}
\begin{equation}
\boldsymbol{x'}_{0} \approx \hat{\boldsymbol{x}}^{'}_{0} \approx 
\hat{\boldsymbol{x}}_{0}^{\text{adv}}.
\label{eq:4}
\end{equation}
\end{small}
Under the assumption that $ \epsilon_{\boldsymbol{\theta}} $ is properly trained, it should obey a Lipschitz continuity condition with $ L_t $ (where $L_t$ denotes the Lipschitz constant at timestamp $t$), so we have:
\begin{small}
\begin{equation}
\frac{\| q_{\boldsymbol{\theta}}(\boldsymbol{x}_{0}, t, \boldsymbol{c}) - q_{\boldsymbol{\theta}}(\boldsymbol{x'}_{0}, t, \boldsymbol{c}) \|}{\|\boldsymbol{x}_{0} - \boldsymbol{x'}_{0} \|} = \frac{\| \boldsymbol{x}_{t} - \boldsymbol{x}_{t}^{\text{adv}} \|} {\|\boldsymbol{x}_{0} - \boldsymbol{x'}_{0} \|}  \leq L_t.
\label{eq:5}
\end{equation}
\end{small}
Noting that $\boldsymbol{x}^{\text{adv}}=\boldsymbol{x}+\delta \approx \boldsymbol{x}$ ($\delta$ is a very small perturbation) and substituting \eqref{eq:3} and \eqref{eq:4} into \eqref{eq:5}, we get:
\begin{equation}
\| \boldsymbol{x}^{\text{adv}} - \hat{\boldsymbol{x}}^{\text{adv}} \| \geq \frac{Q}{L_t}.
\label{eq:6}
\end{equation}
As $t \to T$, according to the Lipschitz continuity of diffusion models, we have $L_t \ll 1$~\cite{yang2024lipschitz}. So \eqref{eq:6} can be rewritten as:
\begin{equation}
\| \boldsymbol{x}^{\text{adv}} - \hat{\boldsymbol{x}}^{\text{adv}} \|
\geq \frac{Q}{L_t}
\gg Q.
\label{eq:7}
\end{equation}
Therefore, the distance between $\boldsymbol{x}^{\text{adv}}$ and $\hat{\boldsymbol{x}}^{\text{adv}}$ is much larger than $Q$, where $Q$ denotes a clearly perceptible distance by the definition of adversarial examples.
\end{proof}

\begin{figure*}[ht]
    \centering
    \includegraphics[width=1\linewidth]{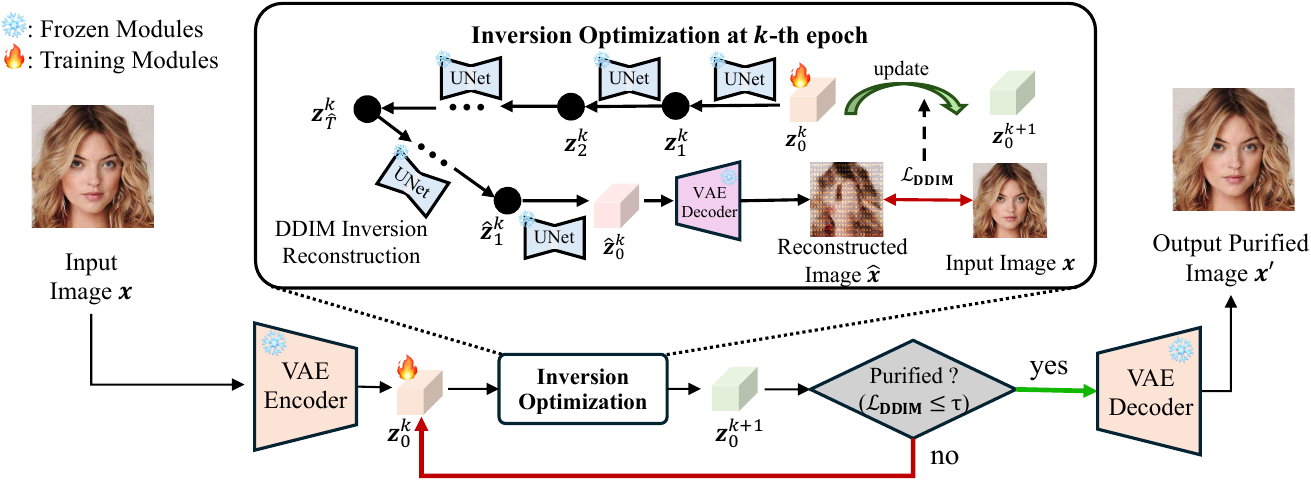}
    \caption{The overall framework of UDAP. 
    The input image is first encoded into a latent space representation, where it undergoes iterative purification through inversion optimization. This optimization is guided by our proposed DDIM metric loss, $\mathcal{L}_{\mathrm{DDIM}}$. Once the latent representation has DDIM metric loss that is less than $\tau$ (deemed as sufficiently purified), it is decoded back into the image space using a VAE decoder, resulting in the final purified image.}
    \label{fig:frame}
\end{figure*} 

Whether an attack targets the UNet or the VAE, it ultimately introduces errors in the latent space that propagate to the final image reconstruction during DDIM inversion.
As can be seen from Fig.~\ref{fig:inv-cmp}, the experimental results further verify this conclusion. 
As demonstrated in this figure, while clean images maintain accurate reconstruction quality (with low $L_2$ distance $0.004$), all adversarial images—regardless of their attack types (Anti-DB for UNet-targeted attacks, PID for VAE-targeted attacks, or MIST for mixed attacks)—show significant differences (with high $L_2$ distances $0.034$, $0.669$ and $0.056$) between the reconstructed images and the original input.
These observations highlight that adversarial images can be distinguished from clean images by DDIM inversion reconstruction. 
As a result, we integrate DDIM inversion reconstruction into our optimization process which forms the foundation of the proposed UDAP.

\subsection{Overview Framework}
The overall framework of UDAP is illustrated in Fig.~\ref{fig:frame}. The process begins with an input image $\boldsymbol{x}$, which is first encoded by the VAE encoder of SD to obtain the initial latent representation $ \boldsymbol{z}^0_0=\mathcal{E}\left( \boldsymbol{x}\right)$. 
At the $k$-th epoch, the latent $\boldsymbol{z}^k_0$ undergoes inversion optimization to produce $ \boldsymbol{z}^{k+1}_0$. 
The value of $\mathcal{L}_\mathrm{DDIM}$ at each epoch is also used to determine whether $\boldsymbol{z}^{k+1}_0$ has been sufficiently purified. This is achieved by comparing $\mathcal{L}_\mathrm{DDIM}$ with a predefined threshold $\tau$, which is a tradeoff of purification strength and computational cost. If $\mathcal{L}_\mathrm{DDIM}>\tau$, indicating insufficient purification, $\boldsymbol{z}^{k+1}_0$ is sent to the next epoch for further inversion optimization. Otherwise, if $\mathcal{L}_\mathrm{DDIM}\le\tau$, the latent is considered purified, and $\boldsymbol{z}^{k+1}_0$	is decoded by the pretrained VAE decoder to produce the final purified image $\boldsymbol{x}'$.

\subsection{Inversion Optimization}
As analyzed above, the DDIM inversion reconstruction error serves as a reliable metric for distinguishing between clean and adversarial images. Building on this insight, we propose the DDIM metric loss $\mathcal{L}_{\mathrm{DDIM}}$ as an indicator of adversarial images by the DDIM reconstruction. By minimizing the DDIM metric loss $\mathcal{L}_{\mathrm{DDIM}}$, inversion optimization process can remove adversarial noise from the latent representation $\boldsymbol{z}^{k}_0$. 
Specifically, leveraging a pretrained UNet $\epsilon_{\boldsymbol{\theta}}$, the latent $\boldsymbol{z}^{k}_0$ is first sampled to timestamp $\hat{T}$ through the DDIM inversion process $q_{\boldsymbol{\theta}}$ as:
\begin{equation}
q_{\boldsymbol{\theta}}\left(\boldsymbol{z}^k_{1:\hat{T}}|\boldsymbol{z}^k_0\right)=\prod \limits_{t=1}^{\hat{T}}q_{\boldsymbol{\theta}}(\boldsymbol{z}^k_t|\boldsymbol{z}^k_{t-1}).
\end{equation}

The inverted latent $z^k_{\hat{T}}$ is then sampled back to timestamp $0$ by the DDIM process $p_{\boldsymbol{\theta}}$ as:
\begin{equation}
p_{\boldsymbol{\theta}}(\hat{\boldsymbol{z}}^k_{\hat{T}-1:0}|\hat{\boldsymbol{z}}^k_{\hat{T}})=\prod \limits_{t=1}^{\hat{T}}p_{\boldsymbol{\theta}}(\hat{\boldsymbol{z}}^k_{t-1}|\hat{\boldsymbol{z}}^k_{t}),
\end{equation}

\noindent where $\hat{\boldsymbol{z}}^k_{\hat{T}}=\boldsymbol{z}_{\hat{T}}^k$ as an initialization. 
The reconstructed image $\hat{\boldsymbol{x}}$ is obtained by decoding $\hat{\boldsymbol{z}}^k_0$ with the pretrained VAE decoder as $\hat{\boldsymbol{x}}=\mathcal{D}(\hat{\boldsymbol{z}}^k_0)$. The DDIM metric loss, which quantifies the distance between the reconstructed image $\hat{\boldsymbol{x}}$ and the input image $\boldsymbol{x}$, is computed using the loss function $\mathcal{L}_{\mathrm{DDIM}}$:

\begin{small}
\begin{equation}
    \mathcal{L}_{\mathrm{DDIM}} =\left\|\mathcal{D}\left(p_{\boldsymbol{\theta}}\left(\hat{\boldsymbol{z}}^k_{\hat{T}-1:0}|q_{\boldsymbol{\theta}}\left(\boldsymbol{z}^k_{1:\hat{T}}|\boldsymbol{z}^k_0\right)\right)\right)-\boldsymbol{x}\right\|_2^2.
\end{equation}
\end{small}

Thus, the inversion optimization problem $P.1$ of $z^k_0$ can be formally defined as:
\begin{small}
\begin{equation}
   P.1\quad\min\limits_{\boldsymbol{z}^k_0}~\mathbb{E}_{\epsilon_{\boldsymbol{\theta}},\boldsymbol{z}^0_0=\mathcal{E}\left(\boldsymbol{x}\right)}\left[\mathcal{L}_{\mathrm{DDIM}}\left(\boldsymbol{x},\boldsymbol{z}^k_0,\boldsymbol{c},\hat{T}\right)\right].
\end{equation}
\end{small}
In summary, this optimization process iteratively refines the latent representation $\boldsymbol{z}^k_0$ to minimize the DDIM metric loss, effectively removing adversarial noise while preserving the content of the image.

\begin{small}

\begin{table*}[ht]
\centering
\begin{tabular}{c|ccccccc}
\hline
Attacks                   & Purification & FDFR↓         & ISM↑          & SER-FQA↑      & BRISQUE↓       & FID↓            & NIQE↓         \\ \hline
\multirow{4}{*}{PID}        & UDAP         & \textbf{0.14} & \textbf{0.57} & \textbf{0.59} & \textbf{21.45} & \textbf{185.72} & \textbf{4.38} \\
                            & DiffPure     & 0.24          & 0.42          & 0.42          & 29.93          & 238.52          & 4.54          \\
                            & GridPure    & 0.21          & 0.48          & 0.46          & 25.29          & 205.62          & 4.46          \\ \cline{2-8} 
                            & -            & 0.87          & 0.02          & 0.06          & 48.24          & 414.35          & 5.96          \\ \hline
\multirow{4}{*}{MIST}       & UDAP         & 0.11          & \textbf{0.61} & \textbf{0.73} & \textbf{20.42} & \textbf{163.25} & \textbf{4.24} \\
                            & DiffPure     & \textbf{0.10} & 0.56          & 0.67          & 25.64          & 224.33          & 4.53          \\
                            & GridPure    & 0.11          & 0.58          & 0.66          & 22.83          & 185.72          & 4.35          \\ \cline{2-8} 
                            & -            & 0.12          & 0.51          & 0.63          & 33.72          & 273.37          & 4.93          \\ \hline
\multirow{4}{*}{Anti-DB}    & UDAP         & \textbf{0.09} & \textbf{0.62} & \textbf{0.72} & \textbf{17.53} & \textbf{142.54} & \textbf{4.21} \\
                            & DiffPure     & 0.14          & 0.58          & 0.66          & 26.48          & 205.82          & 4.48          \\
                            & GridPure    & 0.11          & 0.58          & 0.69          & 22.75          & 158.54          & 4.74          \\ \cline{2-8} 
                            & -            & 0.55          & 0.40           & 0.38          & 38.24          & 342.36          & 5.58          \\ \hline
\multirow{4}{*}{Anti-DF} & UDAP         & \textbf{0.22} & \textbf{0.51} & \textbf{0.64} & \textbf{25.08} & \textbf{202.67} & \textbf{4.56} \\
                            & DiffPure     & 0.46          & 0.33          & 0.55          & 29.50          & 356.22          & 4.57          \\
                            & GridPure    & 0.33          & 0.46          & 0.56          & 27.32          & 235.11          & 4.62          \\ \cline{2-8} 
                            & -            & 0.59          & 0.25         & 0.45          & 39.46          & 257.98          & 5.82          \\ \hline
\multirow{4}{*}{MetaCloak} & UDAP         & \textbf{0.24} & \textbf{0.53} & \textbf{0.58} & \textbf{30.36} & \textbf{264.81} & \textbf{4.49} \\
                            & DiffPure     & 0.54          & 0.31          & 0.44          & 35.15          & 332.52          & 4.62          \\
                            & GridPure    & 0.48          & 0.35          & 0.51          & 34.52          & 325.74          & 4.64          \\ \cline{2-8} 
                            & -            & 0.73          & 0.03          & 0.08          & 47.38          & 404.82          & 5.88          \\ \hline
clean                       & -            & 0.09          & 0.63          & 0.74          & 18.36          & 142.38          & 4.34          \\ \hline
\end{tabular}
\caption{Comparison on the purification performance of different methods on the DreamBooth model on dataset CelebA-HQ. The best-performing purification under each metric is marked with \textbf{bold}.}
\label{tab:dreambooth_cmp}

\end{table*}
\end{small}

\subsection{Dynamic Optimization Epochs}
In practical scenarios, datasets often contain a mixture of adversarial and clean images. Applying the same number of optimization epochs to all images would be computationally inefficient, especially when many images are already clean or require minimal purification. To address this, we introduce a dynamic optimization epochs strategy, which adaptively adjusts the number of optimization epochs based on the DDIM metric loss of each image.
The key idea is to set a tradeoff threshold $\tau$  that determines when the optimization process should terminate. If the DDIM metric loss $\mathcal{L}_{\mathrm{DDIM}}$ for a given latent $\boldsymbol{z}^{k+1}_0$ exceeds $\tau$, indicating that the image is not yet sufficiently purified, the latent will be sent to the next epoch for further optimization. The optimization continues until either $\mathcal{L}_{\mathrm{DDIM}}\le\tau$ or the maximum epoch $K$ is reached.
This threshold should be able to represent the average performance of clean images in DDIM inversion reconstruction.
Therefore, we set the threshold $\tau$ as the average DDIM metric loss on a batch of $N$ clean images, which can be estimated as:
\begin{equation}
    \tau \approx \frac{1}{N} \sum_{n=1}^{N} \mathcal{L}_{\text{DDIM}}(\boldsymbol{x}_n, \epsilon_{\boldsymbol{\theta}}, c, \hat{T}).
\label{eq-tqu}
\end{equation}
In our UDAP, $\tau$ is estimated by 1000 clean images from ImageNet~\cite{deng2009imagenet} as  $\tau=4\times10^{-3}$. 
The impact of different $\tau$ values on the purification performances and efficiency of UDAP will also be discussed in subsequent ablation studies.
By dynamically adjusting the number of optimization epochs based on the DDIM metric loss, UDAP achieves a balance between purification quality and computational efficiency. This strategy ensures that heavily adversarial perturbed images undergo sufficient optimization, while clean or minimally perturbed images are processed efficiently, making the framework highly practical for real-world applications.

\begin{figure*}[htbp]
    \centering
    \includegraphics[width=0.9\linewidth]{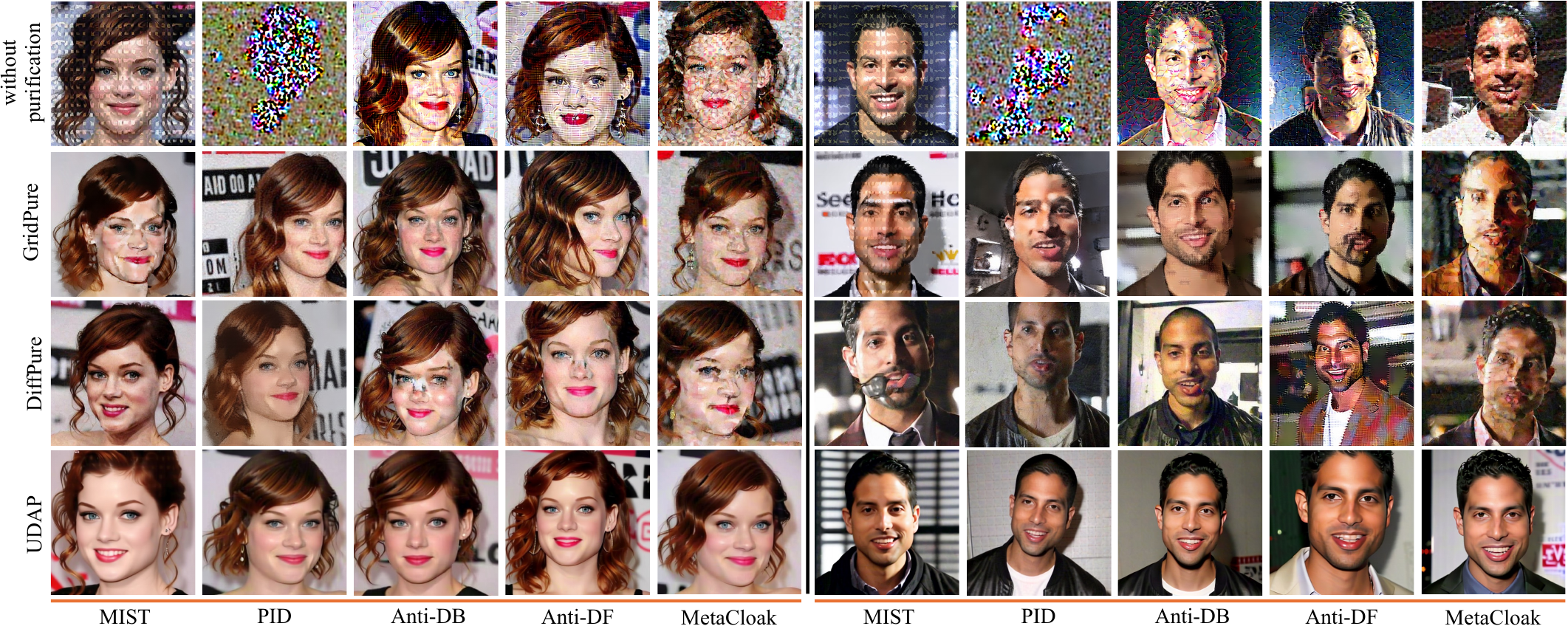}
    \caption{Qualitative purification results of different methods against different adversarial attacks on the DreamBooth model. The specific prompt adopted is “a photo of sks person”. The instances are from CelebA-HQ (left) and VGGFace2 (right).}
    \label{fig:cmp_db}
\end{figure*} 

\section{Experiment}
\subsection{Implementation Details}

\textbf{Datasets.} To conduct adversarial attacks and purifications and to train the DreamBooth models, we utilize the dataset methodology from Anti-DB. Specifically, we conduct experiments using 100 unique identifiers (IDs) sourced from the VGGFace2~\cite{cao2018vggface2} and CelebA-HQ~\cite{karras2017progressive} datasets where each ID contains 4 images.

\noindent\textbf{Training Configurations.}
During the purification process of UDAP, the threshold $\tau$ is $4 \times10^{-3}$ and the max epoch number $K$ is $100$ by default. 
To balance the consumption of GPU memory and the precision of DDIM Inversion, we set the total inference steps $T$ and max depth $\hat{T}$ for DDIM inversion to $20$ and $10$, respectively. Null prompt is used in the DDIM inversion.
The entire purification process, when executed on 8 NVIDIA A6000 GPUs, takes approximately $20$ minutes for $400$ images with the shape of $512 \times 512$  (around $3$ seconds per image). 

\noindent\textbf{Evaluation Metrics.} We perform purification on adversarial images using various adversarial attack methods. Subsequently, we fine-tune SD with DreamBooth to evaluate the purification performance. To measure the purification performance on the DreamBooth models, following Anti-DB, we adopt the following four metrics: BRISQUE~\cite{mittal2012no}, SER-FQA~\cite{terhorst2020ser}, FDFR~\cite{deng2020retinaface}, and ISM~\cite{deng2019arcface}. 
Additionally, we introduce two more Image Quality Assessment (IQA) metrics: Fréchet Inception Distance (FID)~\cite{heusel2017gans} and Natural Image Quality Evaluator (NIQE)~\cite{mittal2012making}.

\subsection{Comparison with Purification Baselines}
We evaluate UDAP's performance against a diverse set of adversarial attacks, including VAE-targeted (PID), UNet-targeted (Anti-DB), and mixed-strategy (MIST) methods. We also tested our UDAP on robust adversarial attack methods (Anti-DF and MetaCloak) that are specially designed to attack purification methods. For baselines, though most adversarial purification methods are targeting classification, we also test recent diffusion based general purification methods, like DiffPure~\cite{nie2022diffusion} and GridPure\cite{10656194}, as a comparison. All these methods are processed in their default configurations.
During the evaluation process, for each trained DreamBooth model, we generate $16$ images under $5$ different seeds, totaling $80$ images, to evaluate the corresponding results, thereby eliminating the variability associated with a single seed.

The quantitative results on DreamBooth for purification of different adversarial methods are shown in Tab.~\ref{tab:dreambooth_cmp}. Compared to clean images, adversarial attacks severely disrupt DreamBooth's performance; for instance, the MetaCloak causes the identity-preserving ISM score to plummet from 0.63 to 0.03. After applying various purification methods, performance improves, but UDAP consistently achieves the best results across most metrics. Against the Anti-DB, UDAP-purified images produce generations with superior facial integrity, achieving the lowest FDFR (0.09) and highest SER-FQA (0.72). It also best recovers the image’s ID features, reaching the highest ISM value of 0.57 against PID, a significant improvement over baselines like DiffPure (0.42). Additionally, UDAP delivers the best image quality, achieving the lowest FID score of 264.81 against the robust MetaCloak attack, whereas other methods score above 325.
In summary, for images from CelebA-HQ, UDAP provides superior adversarial purification performance. More results on VGG-Face can be found in the supplementary.

The qualitative results in Fig.~\ref{fig:cmp_db} further support that UDAP provides superior adversarial purification performance. While methods like GridPure and DiffPure offer some level of purification by promoting the visual quality of generated images, UDAP could purify most adversarial perturbations and make DreamBooth to generate images with the best visual quality. The advantages of UDAP are very evident, especially in purifying images affected by robust methods such as Anti-DF and MetaCloak, where it can minimize visual loss to the greatest extent.
As shown in Fig.~\ref{fig:cmp_pu}, UDAP can remove adversarial noise from adversarial images for both VAE targeted method (PID) and UNet targeted method (Anti-DB), while other purification method still leave a lot of visible noise on the purified images.

\begin{figure}[t]
    \centering
    \includegraphics[width=0.95\linewidth]{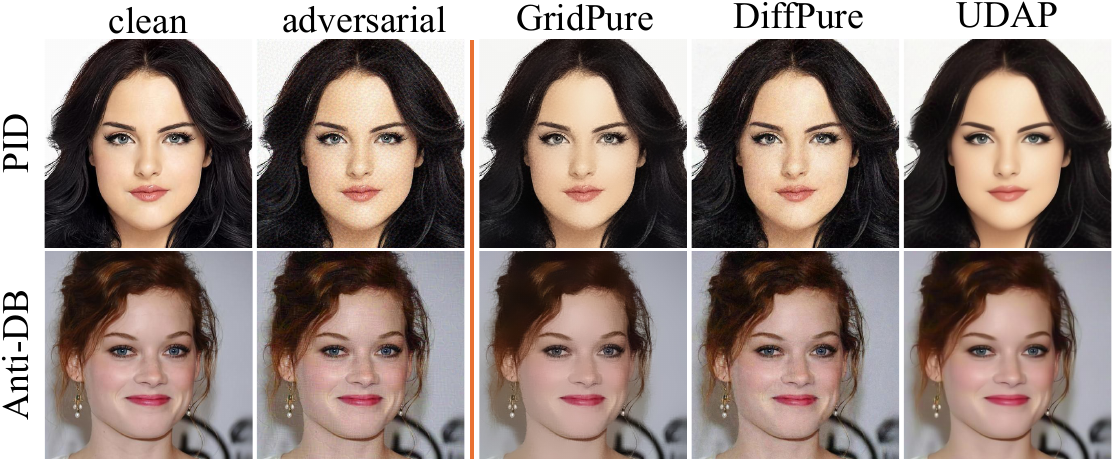}
    \caption{Comparison on the adversarial images and purified images under different adversarial attack and purification methods.}
    \label{fig:cmp_pu}
\end{figure}

\subsection{Ablation Study}
To dynamically adjust the number of inversion optimization epochs, a preset threshold $\tau$ is used to determine whether the latent $z_0$ images are well purified. 
To analyze the impact of $\tau$ on the purification performance and efficiency of UDAP, we conduct comparative experiments with different values of $\tau$ on DreamBooth against Anti-DB.
We constructed a dataset based on CelebA-HQ consisting of 50 unique IDs, with 4 images per ID. 
To simulate real-world scenarios where clean and adversarial images are mixed, 2 images per ID are adversarial perturbed using Anti-DB, while the remaining 2 images are kept clean. 
The experimental results are presented in Tab.~\ref{tab:time}.The first four rows show the purification performance for $\tau$ ranging from $2\times10^{-3}$ to $5\times10^{-3}$. As $\tau$ increases, both the purification performance and the average processing time decrease. Notably, $\tau=4\times10^{-3}$ strikes an optimal balance, offering strong purification performance while maintaining a relatively short processing time.

For comparison, the last two rows in Tab.~\ref{tab:time} show the purification performance when using fixed numbers of epochs ($100$ and $50$). While reducing the number of epochs decreases the average processing time, it also leads to a significant drop in purification performance. These results highlight the effectiveness of UDAP's dynamic optimization strategy, which adaptively adjusts the number of epochs based on the DDIM metric loss, ensuring both efficiency and high-quality purification.
\begin{small}
\begin{table}[ht]
\centering
\begin{tabular}{c|ccc|c}
\hline
value of $\tau$          & ISM↑ & BRISQUE↓ & FID↓   & time(s) \\ \hline

$2\times10^{-3}$          & 0.66 & 18.33    & 146.72 & 18               \\
$3\times10^{-3}$          & 0.66 & 18.68    & 144.81 & 7               \\
$4\times10^{-3}$          & 0.63 & 19.82    & 145.66 & 3                \\
$5\times10^{-3}$          & 0.28 & 24.54    & 232.91 & 2                \\ \hline
epochs=$100$   & 0.68 & 18.13    & 144.27 & 15               \\
epochs=$50$    & 0.22 & 21.10    & 164.53 & 8                \\ \hline
\end{tabular}
\caption{Comparison of purification performance on DreamBooth against Anti-DB with different values of $\tau$. The last two rows represent results with fixed epochs (100 and 50). The "time" column indicates the average time cost per image. The dataset is derived from CelebA-HQ, where half of the images are adversarial, and the remaining half are clean.}
\label{tab:time}
\end{table}
\end{small}

To further assess the impact of UDAP on clean images, we conduct a comparative evaluation between ``adv.'' and ``clean'' datasets using DreamBooth against Anti-DB. The datasets are constructed from CelebA-HQ, with each identity represented by four images. Here, ``adv.'' refers to datasets composed entirely of adversarial examples, while "clean" denotes datasets containing only clean images. As shown in Tab.~\ref{tab:clean}, the experimental results demonstrate negligible performance differences between clean images processed with and without UDAP. This indicates that UDAP has minimal effect on the behavior of clean images.
\begin{small}
\begin{table}[ht]
\centering
\begin{tabular}{c|c|ccc}
\hline
UDAP &dataset         & ISM↑ & BRISQUE↓ & FID↓   \\ \hline

\checkmark      &adv.   & 0.62 & 17.53    & 142.54                \\
\checkmark     &clean     & 0.62 & 18.41    & 144.47               \\  \hline
$\times$ &adv. & 0.40              & 38.24          & 342.36  \\

$\times$ &clean & 0.63              & 18.36          & 142.38  \\

      \hline

\end{tabular}
\caption{Comparison on the purification performance of UDAP on DreamBooth against Anti-DB under datasets with different degrees of adversarial perturbations.}
\label{tab:clean}
\end{table}
\end{small}

\subsection{Unexpected Scenarios}

In practical scenarios, the specific utilization of the SD models by malicious users to perform adversarial attacks is unpredictable.
We perform experiments to evaluate the purification performance of UDAP across different SD versions when they are match or mismatch. Specifically, we attacks the CelebA-HQ dataset using Anti-DB based on SD v$1.4$ and v$2.1$, respectively. We then purified the adversarial images using UDAP based on different SD versions. Subsequently, we trained SD using purified images with the DreamBooth method and analyzed the quality of the generated images. The experimental results are shown in Tab.~\ref{tab:ver}, which demonstrates that UDAP can effectively purify Anti-DB across SD versions, even when the SD versions used for adversarial attacks and purification do not match.
\begin{small}

\begin{table}[ht]
\begin{tabular}{c|c|cccc}
\hline
Adv.                  & Pur. & FDFR↓ & ISM↑ & BRISQUE↓ & FID↓   \\ \hline
\multirow{3}{*}{v2.1} & v2.1 & 0.09  & 0.62 & 17.53    & 142.54 \\
                      & v1.4 & 0.10  & 0.61 & 18.73    & 144.78 \\
                      & no   & 0.55  & 0.40  & 38.24    & 342.36 \\ \hline
\multirow{3}{*}{v1.4} & v2.1 & 0.09  & 0.63 & 17.24    & 146.79 \\
                      & v1.4 & 0.11  & 0.64 & 16.37    & 141.29 \\
                      & no   & 0.54  & 0.38 & 38.36    & 332.76 \\ \hline
\end{tabular}
\caption{Comparison on the purification performance of UDAP against Anti-DB under different versions of SD on dataset CelebA-HQ. The terms “Adv.” and “Pur.” refer to the SD version for adversarial attacks with Anti-DB and purifying with UDAP.}
\label{tab:ver}
\end{table}
\end{small}

For DreamBooth, different prompts can be used to generate various content. To investigate the impact of different prompts on the purification effect of UDAP against Anti-DB, we additionally introduce three prompts: p1 , p2, and p3, which are “a photo of sks person with sad face”, “facial close-up of sks person” and “a photo of sks person yawning in speech” respectively, to evaluate the purification performance. We can see from Tab.~\ref{tab:p} that UDAP can also provide purification across different prompts. 

\begin{table}[ht]
\centering
\begin{tabular}{c|c|cccc}
\hline
P                   & Pur. & FDFR↓         & ISM↑          & BRISQUE↓       & FID↓            \\ \hline
\multirow{2}{*}{p1} & yes  & \textbf{0.11} & \textbf{0.53} & \textbf{17.28} & \textbf{183.39} \\
                    & no   & 0.44          & 0.32          & 37.52          & 346.2           \\ \hline
\multirow{2}{*}{p2} & yes  & \textbf{0.07} & \textbf{0.44} & \textbf{16.83} & \textbf{153.82} \\
                    & no   & 0.62          & 0.13          & 29.51          & 322.71          \\ \hline
\multirow{2}{*}{p3} & yes  & \textbf{0.09} & \textbf{0.33} & \textbf{19.72} & \textbf{201.44} \\
                    & no   & 0.67          & 0.11          & 36.77         & 412.52          \\ \hline
\end{tabular}
\caption{Comparison on the purification performance with different inference prompts on dataset CelebA-HQ. “P” and “Pur.” refer to prompt and purification.}
\label{tab:p}
\end{table}

\section{Conclusion}
This paper presents UDAP, a universal adversarial purification framework specifically designed to mitigate adversarial attacks on SD. UDAP leverages the distinct reconstruction behaviors of clean and adversarial images during DDIM inversion, introducing a DDIM metric loss that effectively eliminates adversarial perturbations while preserving the core content of the input image. UDAP also incorporates a dynamic epoch adjustment strategy, which adaptively optimizes the purification process and significantly improves its computational efficiency. Extensive experimental results demonstrate that UDAP surpasses existing methods in defending against diverse adversarial techniques. Furthermore, UDAP achieves superior purification under cross-version SD scenarios and varying inference prompts, showcasing its generalizability in real-world applications.

\section{Acknowledgments}
This work was supported in part by Macau Science and Technology Development Fund under 001/2024/SKL, 0119/2024/RIB2, and 0022/2022/A1; in part by Research Committee at University of Macau under MYRG-GRG2023-00058-FST-UMDF; in part by the Guangdong Basic and Applied Basic Research Foundation under Grant 2024A1515012536.

\bibliography{aaai2026}

\end{document}